\documentclass{article}
\usepackage{spconf, amsmath, graphicx}
\input{mysymbol.sty}

\usepackage{amsfonts}
\usepackage{color, xcolor}
\usepackage{mathtools}

\usepackage{amsthm}

\newtheorem{theorem}{\hspace{0pt}\bf Theorem}

\usepackage{algorithmic}
\usepackage{algorithm}








\date{\today}

\def\E{\mathbb{E}}

\usepackage{caption}
\captionsetup[figure]{skip=5pt}

\title{A State-Augmented Approach for Learning \\ Optimal Resource Management Decisions in Wireless Networks}
%
%
%

\name{Yi\u{g}it~Berkay~Uslu$^{\star}$ \qquad Navid NaderiAlizadeh$^{\star}$ \qquad Mark Eisen$^{\dagger}$ \qquad Alejandro Ribeiro$^{\star}$}
\address{$^{\star}$ University of Pennsylvania \\
$^{\dagger}$ Intel Corporation}

\begin{document}
\ninept
\maketitle
\begin{abstract}
We consider a radio resource management (RRM) problem in a multi-user wireless network, where the goal is to optimize a network-wide utility function subject to constraints on the ergodic average performance of users. We propose a state-augmented parameterization for the RRM policy, where alongside the instantaneous network states, the RRM policy takes as input the set of dual variables corresponding to the constraints. We provide theoretical justification for the feasibility and near-optimality of the RRM decisions generated by the proposed state-augmented algorithm. Focusing on the power allocation problem with RRM policies parameterized by a graph neural network (GNN) and dual variables sampled from the dual descent dynamics, we numerically demonstrate that the proposed approach achieves a superior trade-off between mean, minimum, and 5\textsuperscript{th} percentile rates than baseline methods.
\end{abstract}
\begin{keywords}
Wireless networks, radio resource management, graph neural networks, Lagrangian duality, state augmentation.
\end{keywords}
\section{Introduction}
\label{sec:intro}

With the widespread adoption of 5G standards across the globe and 6G research already underway to meet the demands for increasing data rates in denser networks, significant efforts are being focused on the problem of \emph{radio resource management} (RRM), where the goal is to optimize the allocation of limited resources (over time, frequency, etc.) across wireless networks. While there are a plethora of optimization and information theory based traditional approaches in the literature to address the RRM problem ~\cite{shi2011iteratively, wu2013flashlinq, naderializadeh2014itlinq, yi2015itlinq+,shen2017fplinq}, learning-based methods have recently gained considerable traction and demonstrated superior performance over prior approaches~\cite{eisen2019learning, nasir2019multi, liang2019deep, eisen2020optimal, shen2020graph, naderializadeh2021resource}, thanks to the recent success of machine/deep learning and ubiquity of data and  computational resources both at the end-user devices and within the network infrastructure~\cite{niknam2020intelligent,lee2020graph, bonati2021intelligence,chowdhury2021unfolding,wang2021unsupervised,letaief2021edge,zhao2021link, nikoloska2021modular, MediaTek_6G_whitepaper_2022,li2022power}.

In~\cite{eisen2019learning, eisen2020optimal, ribeiro2012optimal, naderializadeh2022learning}, a general class of RRM problems are considered, where the goal is to maximize a network-wide utility function, subject to a set of constraints, all of which are defined based on the long-term average performance of users across the wireless network. A common method for solving such problems is to move to the Lagrangian dual domain, where a single objective, i.e., the Lagrangian, can be maximized over the primal variables and minimized over the dual variables, with each dual variable corresponding to a constraint in the original RRM problem. Although the dual problem has been shown under mild assumptions to exhibit a negligible duality gap when the infinite-dimensional RRM policies are approximated by near-universal parameterizations (e.g., neural networks), thus enabling use of primal-dual methods to find near-optimal solutions, it is unclear whether the found RRM policies lead to feasible decisions that satisfy the original constraints.

In this paper, we propose a \emph{state-augmented} RRM algorithm based on the notion of state augmentation in~\cite{calvo2021state}, where the wireless network state at each time step is augmented by the corresponding dual variables as RRM policy inputs. The dual variables are informative for making optimal RRM decisions, since they serve as indicators of how much the constraints have been violated or satisfied over time. Under certain assumptions, we theoretically show that the proposed state-augmented algorithm, unlike conventional primal-dual methods, generates a trajectory of RRM decisions that are both feasible and near-optimal. We also propose a method to improve the offline training procedure by running the dual dynamics in the background during the training phase and sampling the dual variable inputs to the state-augmented RRM policies from the set of optimal dual variables collected in the prior training epochs. 
Moreover, we train a secondary regression model using supervised learning with training network realizations as input and the corresponding optimal dual variable estimates as output, and we leverage this model to initialize the dual variables for the unseen network realizations during the execution phase close to their optimal values for faster convergence of the dual iterates.

We apply the aforementioned state-augmented algorithm to a power control RRM problem with the goal of maximizing the network sum-rate, subject to per-user minimum-rate requirements. We model the network state as the set of channel gains at each time step, and use a graph neural network (GNN) parameterization for the RRM policy. This GNN is trained to output the transmit power levels at each time step by processing the network state as the input graph edge weights and the dual variables as the input graph node features. Through extensive simulations, we show the superiority of the proposed state-augmented algorithm over baseline RRM methods in terms of the trade-off between the mean, minimum, and $5^{th}$ percentile rates. We also present ablation results, which demonstrate that running the dual dynamics in the training phase and sampling the dual variables from the observed trajectories, as well as using the secondary regression model to initialize the dual variables during execution lead to better trade-offs if not superior performance.

\section{Problem Formulation}\label{sec:formulation}
We consider a wireless network 
that operates over a series of time steps $t\in\{0,1,2,\dots,T-1\}$, where at each time step $t$, the set of channel gains in the network, or the \emph{network state}, is denoted by $\bbH_t \in \ccalH$. We assume that the channel state at time step $t$ can be decomposed into a long-term fading component $\bbH^l \in \ccalH^l$ that is assumed to be constant throughout the $T$ time steps and a short-term fast fading component $\bbH^{s}_{t} \in \ccalH^s$. Given the network state, we let $\bbp(\bbH_t)$ denote the vector of \emph{radio resource management (RRM)} decisions across the network, where $\bbp : \ccalH \to \reals^a$ denotes the RRM function. These RRM decisions subsequently lead to the network-wide performance vector $\bbf(\bbH_t, \bbp(\bbH_t)) \in \reals^b$, with $\bbf: \ccalH \times \reals^a \to \reals^b$ denoting the performance function.

Given a concave utility $\mathcal{U}: \reals^b \to \reals$ and a set of $c$ concave constraints $\bbg: \reals^b \to \reals^c$, we define the generic RRM problem as
\begin{subequations}\label{eq:nonparam_problem}
\begin{alignat}{2}
    &\max_{\{\bbp(\bbH_t)\}_{t=0}^{T-1}} &~~& \mathcal{U}\left( \frac{1}{T} \sum_{t=0}^{T-1} \bbf(\bbH_t, \bbp(\bbH_t)) \right),\label{eq:objective_non_param}             \\
    &~~~~~~\text{s.t.} &&  \bbg\left( \frac{1}{T} \sum_{t=0}^{T-1} \bbf(\bbH_t, \bbp(\bbH_t)) \right) \geq \bbzero,%
\end{alignat}
\end{subequations}
where the objective and the constraints are derived based on the \emph{ergodic average} network performance $\frac{1}{T} \sum_{t=0}^{T-1} \bbf(\bbH_t, \bbp(\bbH_t))$ rather than the instantaneous performance $\bbf(\bbH_t, \bbp(\bbH_t))$. The goal of the RRM problem is, therefore, to derive the optimal vector of RRM decisions $\bbp(\bbH_t)$ for any given network state $\bbH_t \in \ccalH$.

\section{Proposed State-Augmented RRM Algorithm}\label{sec:alg}
We propose to replace 
the infinite-dimensional, constrained functional optimization problem of~\eqref{eq:nonparam_problem} with a parametrized Lagrangian dual problem. In particular, by inputting both the network state $\bbH_t$ and the corresponding dual variables to the RRM policy at each time step $t$, the resulting RRM decisions $\bbp(\bbH)$ are represented by a finite-dimensional parametrization given by $\bbp^{\bbphi}(\bbH, \bbmu; \bbphi)$, where $\bbphi\in\bbPhi$ denotes the set of parameters of the state-augmented RRM policy.

For a set of dual variables $\bbmu \in \reals_+^c$, we define the \emph{augmented} Lagrangian as
\begin{align}
\ccalL_{\bbmu}(\bbphi) &= \mathcal{U}\left( \frac{1}{T} \sum_{t=0}^{T-1} \bbf(\bbH_t, \bbp^{\bbphi}(\bbH_t, \bbmu;\bbphi)) \right) \nonumber \\
&\quad+ \bbmu^T \bbg\left( \frac{1}{T} \sum_{t=0}^{T-1} \bbf(\bbH_t, \bbp^{\bbphi}(\bbH_t, \bbmu;\bbphi)) \right).\label{eq:augmented_Lagrangian}
\end{align}
Then, considering a probability distribution $p_{\bbmu}$ for the dual variables, we define the optimal state-augmented RRM policy as that which maximizes the expected augmented Lagrangian over the distribution of all dual parameters, i.e., 
\begin{align}
\bbphi^{\star} &= \arg \max_{\bbphi \in \bbPhi} \E_{\bbmu \sim p_{\bbmu}}\left[ \ccalL_{\bbmu}(\bbphi) \right]. \label{eq:phi_dynamics_augmented}
\end{align}

Utilizing the state-augmented policy parameterized by $\bbphi^{\star}$ in \eqref{eq:phi_dynamics_augmented}, at each time step $t$, we can obtain the Lagrangian-maximizing RRM decisions given by $\bbp(\bbH_t, \bbmu_{\lfloor t/T_0 \rfloor}; \bbphi^{\star})$. To that end, we introduce dual iterates $\bbmu_{k}$, where $k\in\{0,1,2,\dots,K-1\}$ with $K=\lfloor T / T_0 \rfloor$, and $T_0$ denotes the time duration between consecutive dual variable updates. We perform the dual descent updates as 

\begin{align}
\bbmu_{k+1} \hspace{-3pt} &= \hspace{-3pt} \left[\bbmu_k - \eta_{\bbmu} \bbg\left( \frac{1}{T_0} \hspace{-6pt} \sum_{t=kT_0}^{(k+1)T_0-1} \bbf(\bbH_{t}, \bbp^{\bbphi}(\bbH_{t},\bbmu_k;\bbphi^{\star})) \right)\right]_+\hspace{-7pt},\label{eq:mu_dynamics_augmented}
\end{align}
where $[\cdot] := \max(\cdot, 0)$ ensures nonengativity of the dual variables and $\eta_{\bbmu}$ denotes the learning rate for the dual variables. Note that the dual descent dynamics in~\eqref{eq:mu_dynamics_augmented} track the satisfaction/violation of the original constraints. In particular, if the RRM decisions at time step $t$ help satisfy the constraints, the dual variables are reduced. Conversely, if the constraints are not satisfied at a given time step, the dual variables increase in value.

We state the following theorem, which establishes the feasibility and near-optimality of the outlined state-augmented RRM algorithm.

\begin{theorem}\label{thm:near_universality_result}
Under certain mild assumptions such as the boundedness and strict feasibility of the constraints, expectation-wise Lipschitzness of the utility function $\mathcal{U}$ and performance function $\bbf$, and the near-universality of the state-augmented parameterization $\bbp^{\bbphi}(\bbH, \bbmu; \bbphi)$ with degree $\epsilon$ (see Definition~1 in \cite{StateAugmented_RRM_GNN_naderializadeh2022}), the sequence of the RRM decisions made by the proposed state-augmented algorithm in ~\eqref{eq:phi_dynamics_augmented}-\eqref{eq:mu_dynamics_augmented} are both feasible, i.e.,
\begin{align}\label{eq:thm_feasibility_state_augmented}
\lim_{T\to\infty}\bbg\left( \frac{1}{T} \sum_{t=0}^{T-1} \bbf\left(\bbH_t, \bbp^{\bbphi}\left(\bbH_t, \bbmu_{\lfloor t/T_0 \rfloor}; \bbphi^{\star}\right) \right) \right) \geq \bbzero, \, a.s.
\end{align}
and near-optimal, i.e.,
\begin{align}
&\lim_{T\to\infty} \E\left[ \mathcal{U}\left( \frac{1}{T} \sum_{t=0}^{T-1} \bbf\left(\bbH_t, \bbp^{\bbphi}\left(\bbH_t, \bbmu_{\lfloor t/T_0 \rfloor}; \bbphi^{\star}\right) \right) \right)\right] \nonumber \\
&\geq P^{\star} - \ccalO(c\eta_{\mu}) - \ccalO(\epsilon).
\label{eq:thm_optimality_state_augmented}
\end{align}
\end{theorem}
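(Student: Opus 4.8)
The plan is to establish feasibility and near-optimality separately, both by exploiting the dual-descent recursion~\eqref{eq:mu_dynamics_augmented} together with the near-universality of the state-augmented parameterization. Throughout I group the horizon into the $K=\lfloor T/T_0\rfloor$ blocks over which the dual variables are held fixed and write $\hbf_k:=\frac{1}{T_0}\sum_{t=kT_0}^{(k+1)T_0-1}\bbf(\bbH_t,\bbp^{\bbphi}(\bbH_t,\bbmu_k;\bbphi^{\star}))$ for the per-block empirical performance, so that $\frac{1}{T}\sum_{t=0}^{T-1}\bbf(\bbH_t,\bbp^{\bbphi}(\bbH_t,\bbmu_{\lfloor t/T_0\rfloor};\bbphi^{\star}))=\frac{1}{K}\sum_{k=0}^{K-1}\hbf_k$ and $\hbf_k$ is exactly the quantity fed to $\bbg$ in~\eqref{eq:mu_dynamics_augmented}. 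For \emph{feasibility}: since $[\cdot]_+$ is the componentwise projection onto $\reals_+^c$ and hence $[\bbx]_+\geq\bbx$, the recursion gives $\bbmu_{k+1}\geq\bbmu_k-\eta_{\bbmu}\bbg(\hbf_k)$, i.e. $\bbg(\hbf_k)\geq(\bbmu_k-\bbmu_{k+1})/\eta_{\bbmu}$; telescoping over the blocks and then applying Jensen's inequality with concavity of $\bbg$ yields $\bbg\big(\tfrac{1}{K}\sum_k\hbf_k\big)\geq\tfrac{1}{K}\sum_k\bbg(\hbf_k)\geq(\bbmu_0-\bbmu_K)/(\eta_{\bbmu}K)$. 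The load-bearing ingredient here is that the dual iterates remain bounded almost surely: a Lyapunov argument on $\|\bbmu_k\|^2$ using strict feasibility (a Slater margin) and near-universality shows that whenever some coordinate of $\bbmu_k$ is large the policy $\bbphi^{\star}$ must produce strictly positive slack on that constraint, so the update contracts $\|\bbmu_k\|$; I would import this boundedness bound from the supporting results of~\cite{StateAugmented_RRM_GNN_naderializadeh2022}. With $\sup_K\|\bbmu_K\|<\infty$ a.s., letting $T\to\infty$ sends the right-hand side to $\bbzero$, which is~\eqref{eq:thm_feasibility_state_augmented}.

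For \emph{near-optimality}, I would first pass from empirical to ergodic block performance: conditioned on the long-term fading and on the history through block $k-1$ (which determines $\bbmu_k$), one has $\E[\hbf_k\mid\ccalF_{k-1}]=\bbf_{\bbmu_k}$ with $\bbf_{\bbmu}:=\E_{\bbH}[\bbf(\bbH,\bbp^{\bbphi}(\bbH,\bbmu;\bbphi^{\star}))]$, so $\{\hbf_k-\bbf_{\bbmu_k}\}$ is a bounded martingale-difference sequence and $\frac{1}{K}\sum_k\hbf_k$ and $\frac{1}{K}\sum_k\bbf_{\bbmu_k}$ have the same almost-sure limit points. Concavity of $\mathcal{U}$ and Jensen then give $\mathcal{U}\big(\tfrac{1}{K}\sum_k\bbf_{\bbmu_k}\big)\geq\tfrac{1}{K}\sum_k\mathcal{U}(\bbf_{\bbmu_k})=\tfrac{1}{K}\sum_k\big[\ccalL_{\bbmu_k}(\bbphi^{\star})-\bbmu_k^T\bbg(\bbf_{\bbmu_k})\big]$, where $\ccalL_{\bbmu}(\bbphi^{\star})$ is identified with its ergodic value $\mathcal{U}(\bbf_{\bbmu})+\bbmu^T\bbg(\bbf_{\bbmu})$. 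For the subtracted term, completing the square in~\eqref{eq:mu_dynamics_augmented} (projection onto $\reals_+^c$ is nonexpansive and fixes $\bbzero$) gives $\|\bbmu_{k+1}\|^2\leq\|\bbmu_k\|^2-2\eta_{\bbmu}\bbmu_k^T\bbg(\bbf_{\bbmu_k})+\eta_{\bbmu}^2\|\bbg(\bbf_{\bbmu_k})\|^2$, hence $\tfrac{1}{K}\sum_k\bbmu_k^T\bbg(\bbf_{\bbmu_k})\leq\tfrac{\|\bbmu_0\|^2}{2\eta_{\bbmu}K}+\tfrac{\eta_{\bbmu}}{2}\sup_k\|\bbg(\bbf_{\bbmu_k})\|^2$, and boundedness of the $c$ constraints makes the last term $\ccalO(c\eta_{\bbmu})$. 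It then remains to lower bound $\tfrac{1}{K}\sum_k\ccalL_{\bbmu_k}(\bbphi^{\star})$ by $P^{\star}-\ccalO(\epsilon)$: by near-universality there is a $\bbphi^{\dagger}$ whose state-augmented policy approximates, uniformly in $\bbmu$, the unparameterized Lagrangian-maximizing policy for each $\bbmu$, so expectation-wise Lipschitzness of $\mathcal{U}$ and $\bbf$ gives $\ccalL_{\bbmu}(\bbphi^{\dagger})\geq d(\bbmu)-\ccalO(\epsilon)\geq D^{\star}-\ccalO(\epsilon)\geq P^{\star}-\ccalO(\epsilon)$, where $d$ is the dual function and $D^{\star}\geq P^{\star}$ by weak duality, and optimality of $\bbphi^{\star}$ for the $p_{\bbmu}$-averaged augmented Lagrangian transfers the bound to $\bbphi^{\star}$. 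Assembling the three estimates, taking $T\to\infty$, and then taking expectations (bounded convergence, since the performance is bounded) yields~\eqref{eq:thm_optimality_state_augmented}.

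The step I expect to be the main obstacle is the last one. Here $\bbphi^{\star}$ has been optimized only for the $p_{\bbmu}$-\emph{averaged} augmented Lagrangian, whereas the claim requires $\ccalL_{\bbmu_k}(\bbphi^{\star})$ to be within $\ccalO(\epsilon)$ of $d(\bbmu_k)$ along the \emph{data-dependent} trajectory $\{\bbmu_k\}$ generated by~\eqref{eq:mu_dynamics_augmented}, which is not distributed according to $p_{\bbmu}$. This is precisely where the state-augmented construction is needed: since $\bbmu$ is an input to the policy, a \emph{single} $\bbphi$ can be near-optimal for every $\bbmu$ at once, so under a uniform-in-$\bbmu$ reading of the near-universality Definition~1 of~\cite{StateAugmented_RRM_GNN_naderializadeh2022} (with $p_{\bbmu}$ chosen to have sufficiently spread support, used only to certify pointwise near-optimality of the maximizer) the bound $\ccalL_{\bbmu}(\bbphi^{\star})\geq d(\bbmu)-\ccalO(\epsilon)$ can be made to hold for all $\bbmu$ in the bounded region actually visited by the iterates, rather than merely in expectation over $p_{\bbmu}$. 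Making this transfer rigorous — as opposed to the telescoping and Jensen estimates, which are routine — is the part that needs the most care.
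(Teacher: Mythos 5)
Your argument is essentially the proof given in \cite{StateAugmented_RRM_GNN_naderializadeh2022}, to which this paper defers entirely for Theorem~\ref{thm:near_universality_result}: feasibility via the projection inequality $[\bbx]_+\geq\bbx$, telescoping, Jensen with concavity of $\bbg$, and almost-sure boundedness of the dual iterates from the Slater margin; near-optimality via the nonexpansive-projection drift bound on $\|\bbmu_k\|^2$ giving the $\ccalO(c\eta_{\bbmu})$ term, plus the pointwise-in-$\bbmu$ near-optimality $\ccalL_{\bbmu}(\bbphi^{\star})\geq P^{\star}-\ccalO(\epsilon)$ that state augmentation enables. The subtlety you flag at the end --- transferring optimality of the $p_{\bbmu}$-averaged Lagrangian to the data-dependent trajectory $\{\bbmu_k\}$ --- is precisely the step that reference resolves by observing that maximization over unconstrained functions of $(\bbH,\bbmu)$ decouples across $\bbmu$, so the expectation-maximizer is a $p_{\bbmu}$-a.e.\ pointwise maximizer; no genuine gap remains.
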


\begin{proof}
See \cite{StateAugmented_RRM_GNN_naderializadeh2022}.
\end{proof}

It is noteworthy that the state-augmented parameterization $\bbp(\bbH, \bbmu; \bbphi)$ does not only generate a trajectory of feasible and near-optimal RRM decisions, but it also obviates any requirement of memorizing optimal model parameter and dual variable pairs $(\bbtheta^{\star}, \bbmu^{\star})$ (see~\cite{StateAugmented_RRM_GNN_naderializadeh2022} for more details).

\begin{algorithm}[!t]
\caption{Training the State-Augmented RRM Algorithm Model Parameters}
    \label{alg:training}
    \begin{algorithmic}[1]
    \STATE {\bfseries Input:} $N$, $B$, $T$, $T_0$, $K_0$, $N_0$, $N_{\text{start}}$, $N_{\text{end}}$, $\eta_{\bbphi}$, $\eta_{\bbpsi}$, $\eta_{\bbmu}$.
    \STATE Initialize: $\bbphi_0, \, p_{\bbmu, 0};\, k \gets 0$.
    \FOR{$n=0, \ldots, N-1$}
        \FOR{$b=0, \ldots, B-1$}
            \STATE
            Randomly sample $\bbmu_{b,n} \sim p_{\bbmu, n}$ and let $\tilde{\bbmu}_{b, n, 0} \gets \bbmu_{b, n}$.
            \STATE
            Randomly generate a sequence of network states $\{\bbH_{b,t}\}_{t=0}^{T-1} = \{\bbH^{l}_{b} \bbH^{s}_{b,t}\}_{t=0}^{T-1}$.
            \FOR{$t=0, \ldots, T-1$}
                \STATE 
                Generate RRM decisions $\bbp^{\bbphi}\left(\bbH_{b,t}, \bbmu_{b,n}; \bbphi_n\right)$.
                \IF{$t+1 \mod T_0 = 0$}
                    \STATE
                    Run the dual dynamics of ~\eqref{eq:mu_dynamics_augmented} in the background with $\bbmu_k$ replaced by $\tilde{\bbmu}_{b, n, k+1}$ to observe dual variable trajectories.
                    \STATE
                    $k \gets k+1$
                \ENDIF
            \ENDFOR
            \STATE
            Calculate the augmented Lagrangian according to~\eqref{eq:augmented_Lagrangian}, i.e., $\ccalL_{\bbmu_{b}}(\bbphi_n) = \mathcal{U}\left( \! \frac{1}{T}\! \sum_{t=0}^{T-1} \! \bbf\left(\bbH_{b,t}, \bbp^{\bbphi}\left(\bbH_{b,t}, \bbmu_{b,n}; \bbphi_n\right)\right) \right)$ \\
            $+ {\bbmu^T_{b,n}} \bbg\left( \frac{1}{T} \sum_{t=0}^{T-1}\bbf\left(\bbH_{b,t}, \bbp^{\bbphi}\left(\bbH_{b,t}, \bbmu_{b,n}; \bbphi_n\right)\right) \right).$
        \ENDFOR
        \STATE
        Update the model parameters according to~\eqref{eq:phi_sga}.
        \IF{$n \geq N_{\text{start}}$ \AND $n < N_{\text{end}}$}
            \STATE
            Update the dual variable sampling distribution according to~\eqref{eq:dual_avg}-\eqref{eq:p_mu_update}.
        \ENDIF
    \ENDFOR
    \STATE
    $\bbphi^{\star} \gets \bbphi_N$.
    \STATE
    Train the dual regression model according to \eqref{eq:dual_regression_loss} to obtain $\bbpsi^{\star}$.
    \STATE
    {\bfseries Return:} {Optimal model and dual regression parameters $\bbphi^{\star}$, $\bbpsi^{\star}$.}
    \end{algorithmic}
\end{algorithm}


\subsection{Offline Training Procedure}
In order to tackle the maximization in~\eqref{eq:phi_dynamics_augmented} during the offline training phase, we iteratively solve the empirical version of~\eqref{eq:phi_dynamics_augmented} through gradient-ascent updates on the RRM model parameters given by
\begin{align}
\bbphi_{n+1} &= \bbphi_n +  \frac{\eta_{\bbphi}}{B} \sum_{b=0}^{B-1} \nabla_{\bbphi} \ccalL_{\bbmu_{b,n}}(\bbphi_n),\label{eq:phi_sga}
\end{align}
where $n\in\{0,1,2,\dots,N-1\}$ represents the policy parameter training iterations and $\eta_{\bbphi}$ denotes the learning rate corresponding to the RRM model parameters $\bbphi$. We denote the resulting optimal RRM model parameters as $\bbphi^{\star} \coloneqq \bbphi_N$. At the beginning of each iteration $n$, a batch of dual variables $\{\bbmu_{b,n}\}_{b=1}^B$ is sampled from a (discrete/continuous) distribution $p_{\bbmu, n}$ and for each batch, random realizations of the channel states $\{\bbH_{b,t}\}_{t=0}^{T-1} = \{\bbH^{l}_{b} \bbH^{s}_{b,t}\}_{t=0}^{T-1}$ are generated. We note the explicit dependence of the sampling distribution on the iteration index $n$ since we modify the training procedure outlined in \cite{StateAugmented_RRM_GNN_naderializadeh2022} by the following subroutine:

\begin{itemize}
    \item We run the dual dynamics in~\eqref{eq:mu_dynamics_augmented} \emph{in the background} during the training procedure to observe the trajectories followed by the dual variables, which we denote by $\tilde{\bbmu}$. We consider an update window for the initial dual variable sampling distribution, comprised of the iterations between $N_{\text{start}}$ and $N_{\text{end}}$. For all iterations $n < N_{\text{start}}$ (resp., $n \geq N_{\text{end}}$), we set $p_{\bbmu, n} = p_{\bbmu, 0}$ (resp., $p_{\bbmu, n} = p_{\bbmu, N_{\text{end}}}$).
    \item Throughout the update window $N_{\text{start}} \leq n < N_{\text{end}}$, we compute the average of the corresponding last $K_0$ dual variable iterates $\tilde{\bbmu}_{K-K_0+1}, \ldots, \tilde{\bbmu}_{K}$ over the most recent $N_0$ iterations for all network configurations and update $p_{\bbmu, n+1}$ as the distribution engendered by those averages. With a slight abuse of notation, we can express these averaging and updating operations by
    \begin{align}
        \overline{\bbmu}_{b,n}
        &= \left \{ \frac{1}{N_0 K_0}\sum_{m=n-N_0+1}^{n} \sum_{k=K - K_0+1}^{K} \tilde{\bbmu}_{b,m,k} \right \} \label{eq:dual_avg} \\
        p_{\bbmu, n+1} &\gets \left \{ \overline{\bbmu}_{b, n}
        \right \}_{b=0}^{B-1} \label{eq:p_mu_update}
    \end{align}
    
    \item Assuming $p_{\bbmu, N_{\text{end}}}$ converges to the optimal dual variable distribution $p_{\bbmu}^{\star}$, we train a secondary model $\bbd(\bbH^{l}_{b};\bbpsi)$, parameterized by a set of parameters $\bbpsi \in \bbPsi$, which we refer to as the \emph{dual regression model} (a similar model has also been proposed in~\cite{elenter2022a}). This model takes as input the long-term fading component, $\bbH^{l}_{b}$, and tries to predict the corresponding optimal dual variables. In particular, we train this model to minimize the mean squared error given by
    \begin{align}\label{eq:dual_regression_loss}
      \hspace{-3em}  \bbpsi^{\star} = \argmin_{\bbpsi \in \bbPsi} \frac{1}{B c} \sum_{b=0}^{B-1} \left\| \bbd(\bbH^{l}_{b};\bbpsi) - \overline{\bbmu}_{b, N_{\text{end}} + N_0} \right\|_{2}^{2}.
    \end{align}
    which can be performed via gradient-descent based approaches with a learning rate of $\eta_{\bbpsi}$.
\end{itemize}

A compact summary of the state-augmented offline training procedure is shown in Algorithm~\ref{alg:training}. We remark that although the modified training procedure samples the dual variables from the dual descent trajectories and hence is more robust to the choice of the initial sampling distribution, our experiments indicate that the choice of the initial sampling distribution still affects the training of the optimal RRM model parameters.

\subsection{Online Execution Phase}
The execution phase is identical to that of \cite{StateAugmented_RRM_GNN_naderializadeh2022} except here, we initialize the dual variables according to the output of the dual regression model, $\bbmu_0 = \bbd(\bbH^{l}; \bbpsi^{\star})$. During execution, the dual dynamics are used to update the dual variables and generate the RRM decisions as follows: For any time step $t\in\{0,1,2,\dots,T-1\}$, given the network state $\bbH_t$, we generate the RRM decisions using the state-augmented RRM policy $\bbp(\bbH_t, \bbmu_{\lfloor t / T_0 \rfloor};\bbphi^{\star})$. Then, we update the dual variables every $T_0$ time steps as in~\eqref{eq:mu_dynamics_augmented}.

\begin{figure*}[t]
    \centering
    \includegraphics[width=\textwidth]{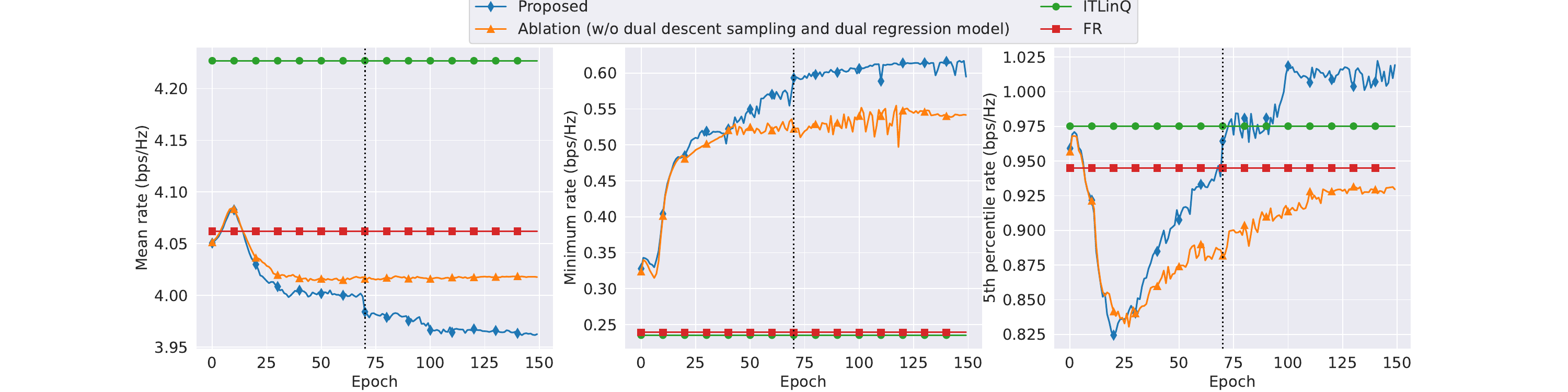}
    \caption{Convergence behavior of the proposed state-augmented RRM algorithm with dual descent sampling and dual regression model training and its comparison with its counterpart without dual descent sampling and dual regression model and baseline methods for $m=12$ transmitter-receiver pairs, fixed-density scenario, $f_{\min} = 0.5$ bps/Hz. The dashed vertical line coincides with the first epoch when the dual variable initializations using the dual regression model begin.}
\label{fig:all_rates_single_network}
\end{figure*}

\begin{figure*}[t]
    \centering
    \includegraphics[width=\textwidth]{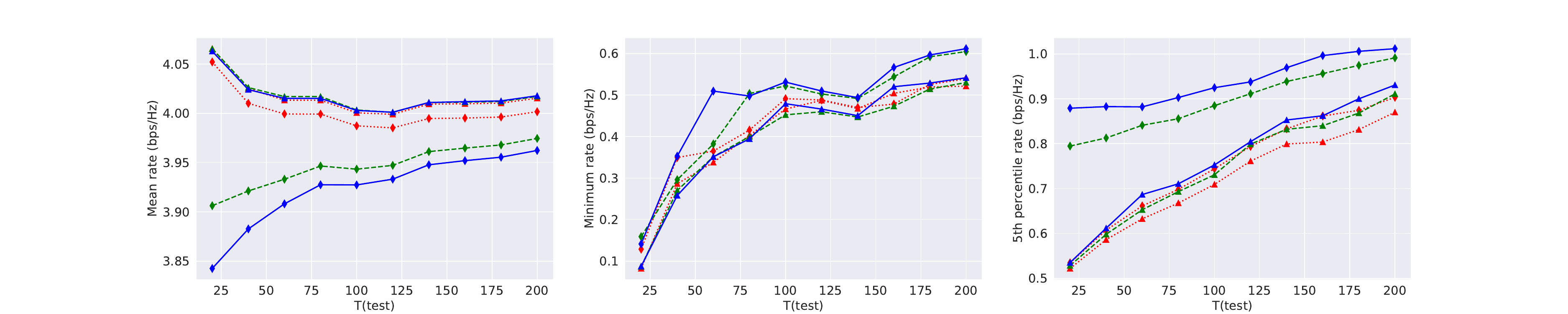}
    \caption{Comparison of the proposed and ablated state-augmentation algorithms in terms of the time evolution of the performance metrics at three different epochs for the same network configuration as in Figure~\ref{fig:all_rates_single_network}. Dotted, dashed and solid lines correspond to the model evaluations at training epochs of $N/3 = 50$, $2N/3 = 100$ and $N-1=149$, respectively, whereas the diamond and triangular markers identify the curves for the proposed and ablated state-augmentation algorithms.}
\label{fig:all_rates_vs_T_single_network}
\end{figure*}

\section{Experimental Results}
\label{sec:power_control}

We put the proposed state-augmented learning algorithm to test in the power control problem setting of \cite{StateAugmented_RRM_GNN_naderializadeh2022}. To summarize the setup briefly, the networks are comprised of $m$ \emph{users}, i.e., transmitter-receiver pairs $\{(\mathsf{Tx}_i, \mathsf{Rx}_i)\}_{i=1}^m$, where each transmitter communicates with a single designated receiver and each receiver treats the incoming signals from all but its associated transmitter as interference. The network state at time step $t$, i.e., $\bbH_t\in\mathbb{C}^{m \times m}$, contains all channel gains in the network, where the channel gain between transmitter $\mathsf{Tx}_i$ and receiver $\mathsf{Rx}_j$ at time step $t$ is denoted by by $h_{ij, t}\in\mathbb{C}$. Denoting the maximum transmit power by $P_{\max}$, the RRM decisions $\bbp \in [0, P_{\max}]^m$ represent the transmit power levels of the transmitters. The performance function $\bbf(\bbH_t, \bbp)\in\reals^m$ represents the receiver rates, where for each receiver $\mathsf{Rx}_i$, the rate at time step $t$ is given by
\begin{align}
f_i(\bbH_t, \bbp) = \log_2\left(1+\frac{p_i \left|h_{ii,t}\right|^2}{N + \sum_{j=1, j\neq i}^m p_j \left|h_{ji,t}\right|^2}\right),
\end{align}
where $N$ denotes the noise variance, it is assumed that capacity-achieving codes are used and the receiver treats all incoming interference as noise. We use a sum-rate utility $\ccalU(\bbx)=\sum_{i=1}^m x_i$, and $m$ constraints $g_i(\bbx) = x_i - f_{\min}, i=1,\ldots, m$ where $f_{\min}$ denotes the minimum per-user rate requirement.

Motivated by their scalability, transferability and permutation-equivariance properties, we parameterize both the state-augmented RRM policies and the dual regressor by graph neural network architectures as in~\cite{eisen2020optimal, lee2020graph, naderializadeh2022learning, shen2019graph}. We adopt the same 3-layered architecture as in~\cite{StateAugmented_RRM_GNN_naderializadeh2022} for the state-augmented model where at each time step, the nodes represent users (i.e., transmitter-receiver pairs), the edges between nodes $i$ and $j$, $e_{ij}$ are weighted and normalized by a function of the channel gains $h_{ij}$ and the input node features are the corresponding dual variables. The output layer produces a scalar output feature for each node in unit interval, which is scaled by $P_{\max}$ to obtain the transmit power decisions that satisfy the power level constraint of  $\bbp \in [0, P_{\max}]$. The dual regression model shares the same backbone as the RRM model but takes only the long-term channel component $\bbH_l$ as the input and no node features (or equivalently, dummy node features). 

We briefly mention the experiment parameters, majority of which carry over from~\cite{StateAugmented_RRM_GNN_naderializadeh2022} unchanged. We consider dual-slope path-loss model with log-normal shadowing for the large-scale fading and Rayleigh distribution for the small-scale fading. Unless otherwise noted, we set $P_{\max} = 10$dBm, $N=-104$dBm, $R = \sqrt{m / 20} \times 1$km and $R=500$m for fixed and variable density configurations, respectively, $T_0 = 5$, $K_0 = 5$, $T = 200$, $N_0 = 10$, $N = N_{\bbphi} = 150$ and $N_{\psi} = 50$ training epochs for the state-augmented RRM and dual regression models, respectively, $N_{\text{start}} = N / 10 = 15$, $N_{\text{end}} = 0.4N = 60$. In addition, we set the learning rates as $\eta_{\bbphi} = 10^{-1} / m$, $\eta_{\bbpsi} = 10^{-3}$, $\eta_{\bbmu} = 2$. Using a batch size of $B=128$, a total of $256$ training samples and $128$ test samples are generated. We initially draw the dual variables randomly from $U(0,1)$ distribution. 

We compare the performance of our proposed state-augmented RRM algorithm in terms of the mean rate, minimum rate and 5\textsuperscript{th} percentile rate metrics against full reuse (FR), where every transmitter uses $P_{\max}$, ITLinQ~\cite{naderializadeh2014itlinq} and the ablated state-augmented algorithm lacking the dual descent sampling and dual variable initialization by the dual regression model subroutines.

Figures~\ref{fig:all_rates_single_network} and~\ref{fig:all_rates_vs_T_single_network} compare the performance of the proposed algorithm with the baselines for $m=12$ users in a fixed-density scenario with $f_{\min} = 0.5$. As the first figure shows, both state-augmented algorithms meet the minimum rate constraint at the expense of slightly lower mean rates while the other two baselines fail to satisfy the minimum rate requirement. Nevertheless, the proposed state-augmentation algorithm surpasses its ablated counterpart by a noticeable margin with respect to the percentile rate metric, which leads to a more fair policy, and delivers even higher minimum rate than required by the minimum rate constraint. The second figure compares the execution phase convergence behavior of the proposed and ablated state-augmentation algorithms with RRM models trained for three separate training iterations, first of which is before the dual regression model is trained. It is readily seen in the minimum and percentile rate plots that the suboptimal, transient period lasts noticeably shorter for the proposed state-augmentation algorithm than for the ablated algorithm. That the gap between the mean rates in earlier time steps is much larger and becomes narrower as time rolls out is to be expected since when the dual variables are not initialized (close) to zero, the relative contribution of the utility term in~\eqref{eq:augmented_Lagrangian} to the maximization of the Lagrangian is lowered.

\section{Conclusion}\label{sec:conclusion}
We considered the radio resource management (RRM) problem in multi-user wireless networks where the goal is to maximize a network-wide utility subject to constraints on the ergodic average performance of the users. Based on the notion of state-augmentation where RRM policies are parametrized not only by the network states but also by the dual variables as an indicator of constraint violations, we proposed a state-augmented algorithm which is proven to lead to RRM decisions that are feasible and approximately optimal given a near-universal parametrization. Using graph neural network (GNN) architectures to parametrize both the state-augmented RRM policies for optimal power allocations and a dual regression model for estimating the optimal dual variables based on the dual descent trajectory samples, the proposed method is shown via numerical experiments to achieve superior trade-off between the mean rate, minimum rate and percentile rate than the baseline methods.




\clearpage

\bibliographystyle{IEEEbib}
\bibliography{refs}

\end{document}